\newcommand\independent{\protect\mathpalette{\protect\independenT}{\perp}}
\def\independenT#1#2{\mathrel{\rlap{$#1#2$}\mkern2mu{#1#2}}}
\title[Degeneration in VAE]{Degeneration in VAE: in the Light of Fisher Information Loss}
    \author{\Name{Huangjie Zheng} \Email{zhj865265@sjtu.edu.cn}\\
    \addr Cooperative Medianet Innovation Center\\
  Shanghai Jiao Tong University, Shanghai, 200240, China
    \AND
    \Name{Jiangchao Yao} \Email{sunarker@sjtu.edu.cn}\\
    \addr Cooperative Medianet Innovation Center\\
  Shanghai Jiao Tong University, Shanghai, 200240, China
    \AND
    \Name{Ya Zhang} \Email{ya\_zhang@sjtu.edu.cn}\\
    \addr Cooperative Medianet Innovation Center\\
  Shanghai Jiao Tong University, Shanghai, 200240, China
      \AND
      \Name{Ivor W. Tsang} \Email{ivor.tsang@uts.edu.au}\\
      \addr Centre for Artificial Intelligence \\
  University of Technology Sydney, Sydney, Australia
  }
\begin{document}

\maketitle

\begin{abstract}
While enormous progress has been made to Variational Autoencoder (VAE) in recent years, similar to other deep networks, VAE with deep networks suffers from the problem of degeneration, which seriously weakens the correlation between the input and the corresponding latent codes, deviating from the goal of the representation learning. To investigate how degeneration affects VAE from a theoretical perspective, we illustrate the information transmission in VAE and analyze the intermediate layers of the encoders/decoders. Specifically, we propose a Fisher Information measure for the layer-wise analysis. With such measure, we demonstrate that information loss is ineluctable in feed-forward networks and causes the degeneration in VAE. We show that skip connections in VAE enable the preservation of information without changing the model architecture. We call this class of VAE equipped with skip connections as SCVAE and perform a range of experiments to show its advantages in information preservation and degeneration mitigation.

\end{abstract}

\begin{keywords}
Variational AutoEncoder, Fisher Information, Degeneration 
\end{keywords}

\section{Introduction}
Variational Autoencoder (VAE) \citep{journals/corr/KingmaW13} is one representative generative model to combine variational inference with deep learning, and has shown great promise in the recent years. This is not only because of its strong ability to reason raw data with meaningful representation, providing possibilities for downstream works such as classification, generation, etc., but also because of its automatic feature learning and inference process with deep learning techniques.
 
Nowadays, many variants of VAE are proposed to concern the unsupervised latent representation learning to adapt to various tasks. One line of research works break the simple assumption on likelihood in primitive VAE models and introduce the autoregressive density to sequentially model the generation. For example, ~\cite{DBLP:journals/corr/GulrajaniKATVVC16,DBLP:journals/corr/OordKK16,DBLP:journals/corr/OordKVEGK16} propose PixelCNN and PixelRNN to reconstruct the image pixel by pixel, and utilize the contextual information to constrain the generation. Another line of research works pay attention to the expressiveness of the posterior modeled with VAE. To improve its power, \cite{DBLP:journals/corr/BurdaGS15,tomczak2017vae} introduce more complex hierarchical priors or transform simple priors to the complex ones by the normalization flow and its variants~\citep{rezende2015variational,kingma2016improving,sonderby2016ladder}. Although previous works make VAE more flexible to adapt various tasks, they also raise some new problems. A typical one is the degeneration when VAE is in the deeper architecture.

\begin{figure}[t]
 \centering
         \includegraphics[width=\columnwidth]{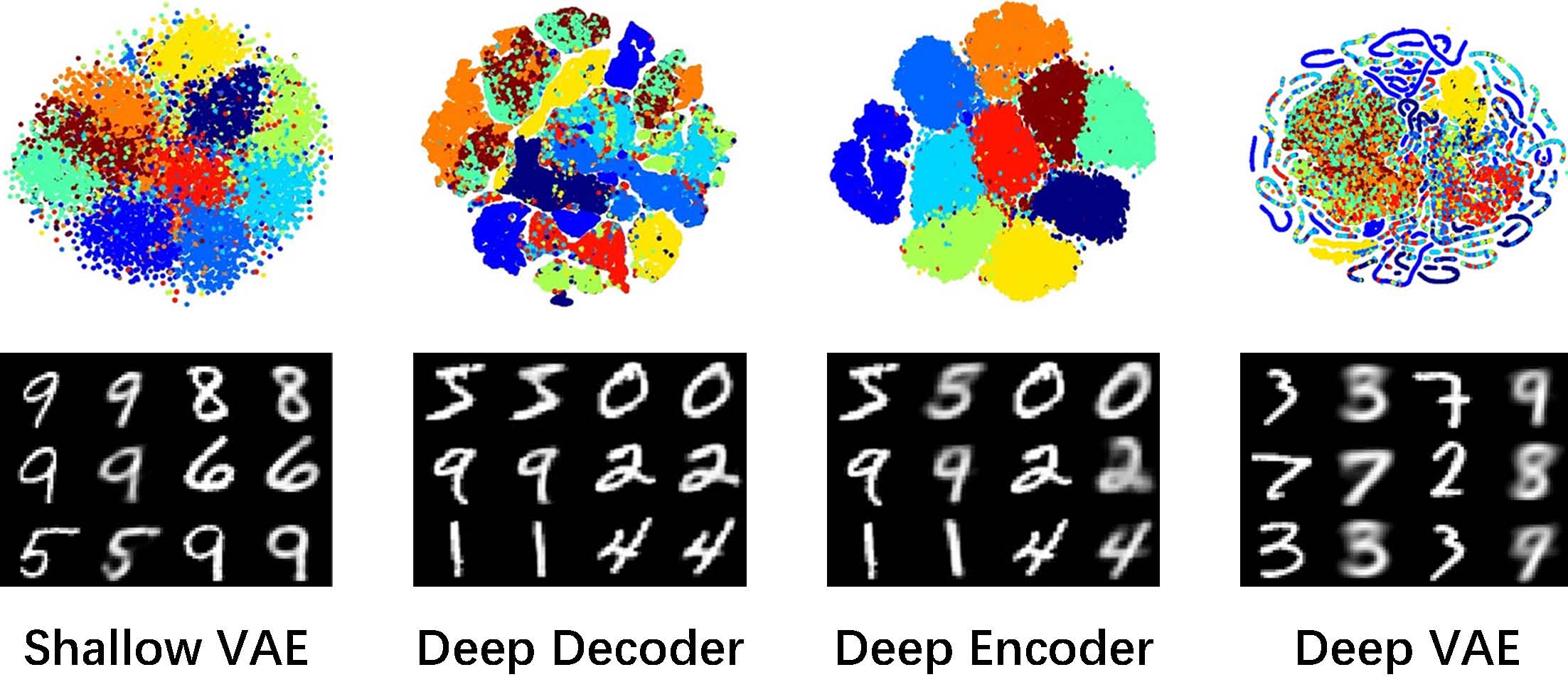}
     \caption{Deeper network's impacts in VAE models. \textbf{Upper}: representation of latent variable; \textbf{Lower}: ground truth (odd columns) and reconstruction samples (even columns).}
     \label{motiv}
\end{figure}

As expected, a deeper encoder should be conducive to the learning of useful latent code that well summarize the observations because of the powerful feature learning capacity of deeper feed-forward networks \citep{zeiler2014visualizing}; meanwhile, a deeper decoder should enable the production of generations of higher quality thanks to the better distribution modeling with deep networks \citep{DBLP:journals/corr/GulrajaniKATVVC16}. However, the degeneration is reported to limit the capacity of the deep networks \citep{saxe2013exact,he2016deep}. Different from the deep networks, in VAE, as shown in Figure \ref{motiv}, the degeneration occurs in three forms: Compared to VAE in shallow architecture, 1) a deeper decoder enables VAE to produce generations of higher quality, while the latent representation (visualized with t-SNE \citep{maaten2008visualizing}) is shown useless in providing high-level summary of the observation; 2) a deeper encoder helps the latent representation learn more global information, while the generation is of low quality; 3) when encoder and decoder both go deeper, VAE fails in both latent representation and generation. It seems that the degeneration does not affect VAE as the deep networks, but to harm the correlation between the input and the latent code.

The above phenomena motivate us to trace back to the connection between the input and the latent code. We illustrate the autoencoding process, in our paper, as a process of information transmission. Although in some previous work like \cite{zhao2017infovae} the mutual information has been proposed to enhance the connection between data and latent code, as the encoder and decoder go deeper, mutual information between data and latent code is more difficult to maintain. A natural solution is to investigate the information propagation layer by layer, which yet brings difficulty to the mutual information measure. Therefore, considering the output of hidden layers as a parametric implicit distribution, we propose a Fisher Information \citep{brunel1998mutual} measure to quantify the information loss layer by layer. With such measure, we demonstrate that the information loss generally exists in the encoder and decoder, which results in poor connection between latent code and data, thus leading to the previous three types of degeneration. In addition, we demonstrates that skip connections \citep{he2016deep,huang2016densely} could serve as a complementary information flow to help mitigate the degeneration without increasing the model complexity. Thus a variant SCVAE, \emph{i.e.}, VAE with skip connections, is proposed to preserve information when encoder and decoder go deeper. Finally, we conduct a series of experiments on widely used MNIST dataset. Comprehensive results indicate that our model performs well in information preservation, thus ensures a promising performance in latent representation learning and reconstruction at same time. Moreover, our model can be adaptive to other state-of-the-art VAE models for further amelioration.


\section{Related work}
In this section, we first review the recent progress of Variational AutoEncoder, which improves VAE in two perspectives: the expressiveness of likelihood and the expressiveness of posterior. Then we review some relevant works that address the degeneration of deep networks and the similarity to VAE. 

\subsection{Expressiveness of Likelihood}
Recently, research on a more expressive decoder has been conducted to improve the generative performance of VAE. 
In the research that applies VAE to sequence modeling, a powerful decoder is proved to be more expressive~\citep{chung2015recurrent}. Numerous research works combine recurrent and autoregressive models to achieve a powerful decoder: ~\cite{mathieu2015masked} proposed MADE, which masks the autoencoder’s parameters to respect
autoregressive constraints; ~\cite{gregor2015draw} proposed a recurrent structure to gradually reconstruct observations focusing on regions of interest. \cite{DBLP:journals/corr/GulrajaniKATVVC16,salimans2017pixelcnn++} model the dependencies among pixels with autoregressive density estimator \emph{e.g.}, PixelCNN and PixelRNN ~\citep{DBLP:journals/corr/OordKVEGK16,DBLP:journals/corr/OordKK16}  to serve as an expressive conditional distribution.  

\subsection{Expressiveness of Posterior}
Meanwhile, some works focus on augmenting the expressiveness of VAEs to model the complex posterior.  
In order to avoid using too simplistic priors, many research works propose to use multimodal distribution such as Gaussian mixture~\citep{dilokthanakul2016deep}. An alternative way is we first apply a simple prior, but complicate it gradually: normalization flow~\citep{rezende2015variational,kingma2016improving} is thus introduced to transform the variational distribution into more complex ones by applying successive invertible smooth transformation. Other methods design hierarchical latent variable structure~\citep{sonderby2016ladder} to approximate a complex posterior, or deploy auxiliary information such as label, ``Maximum Mean Discrepancy", pseudo-input, etc. to gradually increase the flexibility of posterior~\citep{DBLP:journals/corr/KingmaRMW14,louizos2015variational,tomczak2017vae}. 

\subsection{Degeneration in Networks}
It is noteworthy that neural networks and VAE possess similarities and differences when going deeper.   
\cite{he2016deep,huang2016densely} point out the vanishing-gradient problem that prevents the neural network from going deeper, and introduce residual connections to help training of very deep neural networks. \cite{saxe2013exact,orhan2017skip} point out deep neural network is defective by degeneration and claim degeneration occurs when networks lack sufficient information to provide for learning dynamics, which corresponds to the third degeneration observed in Figure \ref{motiv}. Apart from this phenomenon, the other two degeneration problems are similar to information preference \citep{chen2016variational,zhao2017infovae}. Different from that, deeper encoder or decoder is supposed to be a powerful approximator of distribution, but the expressiveness does not correspond to our expectation. Fisher Information can be applied to measure the quality of parameters in neural networks as mentioned in \cite{Desjardins:2015:NNN:2969442.2969471,Ollivier2015}. 
Inspired by these works, we investigate the observed problems in perspective of Fisher Information measure and demonstrate the existence of information loss in deep VAE. Skip connection is applied as a solution for information preservation, but not for avoiding gradient vanishing problem \citep{he2016deep}.

\section{Degeneration in VAE}\label{sec:degeneration}
In this section, we first give a brief review of VAE. Then we present the observed degeneration problems shown in Figure \ref{motiv}, which obstruct VAE from going deeper.

As we know, the goal of VAE is to reason the data $X$ with the latent variables $Z$ by marginalization \citep{journals/corr/KingmaW13}:
\begin{align} \label{eq:likelihood}
    \log{p_{\theta}(\textbf{x})} = \sum_{i=1}^N \log{\int p_{\theta}(x_i,z_i) d{z_i} } 
\end{align}
where $\theta$ is the parameter of the model and $N$ is the number of datdapoints. However, Eq.~\eqref{eq:likelihood} is usually intractable due to the lack of the analytical form for the integration. The common way to solve this problem is to introduce an evidence lower bound (ELBO):
\begin{align}\label{eq:elbo}
\begin{split}
\log{p_{\theta}(x_i)} 
& \geq \underbrace{\mathbb{E}_{q_{\phi}(z|x_i)}[\log{p_{\theta}(x_i|z)}] - D_{\text{KL}}(q_{\phi}(z|x_i)||p_{\theta}(z))}_{\mathcal{L}_{\text{ELBO}}(\theta,\phi;x_i)} ,\\
\end{split}
\end{align}
which is applied as an optimization objective so as to maximize the log-likelihood by introducing an inference model $q_\phi$ (also called recognition model) parameterized with $\phi$.
$\mathcal{L}_{\text{ELBO}}(\theta,\phi;x_i)$ consists of two terms: the first term is to fit the data, called \emph{reconstruction term}, and the remaining term is to fit the prior, called \emph{\text{KL}-divergence term}. When such lower bound is sufficiently optimized, the log-likelihood is approximately maximized. 

The advantage of VAE lies on combining the variational inference with deep learning. Networks are applied to model the posterior $q_{\phi}(z|x)$ and conditioned likelihood $p_\theta(x|z)$, named encoder, decoder respectively. When a network go deeper, the modeling capacity is supposed to be more powerful. Hence, the latent presentation and generation quality are supposed to be improved when VAE goes deeper.

However, this conjecture is not exactly in accord in the context of three types of degeneration shown in Figure \ref{motiv}. We observe the latent code (visualized by T-SNE \citep{maaten2008visualizing}) and the generation respectively. The shallow one is a typical Valina VAE \citep{journals/corr/KingmaW13}. We extend the depth of encoder/decoder of this referenced model. When the encoder is deepened, we observe that the visualization of latent code becomes more compact, which brings us an intuition that the model well summarizes high-level information, while the generation is of worse quality. When we only extend decoder depth, the result is in reverse. We observe generation of higher quality, while the latent code seems more abstruse. We thus expect to extend both sides to avoid this imbalance. Unfortunately, both latent code and generation become of worse quality.

Concretely, the third type of degeneration is equivalent to the degeneration in neural networks, {which occurs due to the lack of information for the learning in networks \citep{saxe2013exact}.} When VAE degenerates in this way, we can observe it is hardly optimized during training. The other two degeneration problems remind us of the information preference problem \citep{chen2016variational,zhao2017infovae}. {Different from our work, they use mutual information between data and latent code to enhance the meaningfulness of latent code. However, only referring to the mutual information between both ends is not enough as the architecture depth increases. The information transmission through the intermediate layers is worthy to concern. Moreover, in deep architecture, the mutual information is intractable for layer-wise computation, since the distribution form modeled by hidden layers is implicit, though parametric in most case. To address these concerns, we propose Fisher Information as a parametric measure, which will be discussed in the next section.}

\section{Fisher Information Loss Analysis}\label{sec:forget}
In this section, we first introduce the notion of Fisher Information, which is useful in information theory. Then we {illustrate the VAE as an information transmission process and analyze the above degeneration in the light of the Fisher Information.}

\subsection{Review of Fisher Information}
The Fisher Information is an important quantity in information theory and can be applied to {measure the quality of parametric estimation of distributions \citep{brunel1998mutual}.}

When we consider a stochastic variable $X$, whose probabilistic density function is $p_\theta(x)$, parameterized by $\theta \in \mathbb{R}$, we need to estimate the parameter $\theta$ from the measured values of the Variable $X$ (named \textit{observations}). Suppose that the {true} value of parameter is $\theta_0$. The estimation corresponds to the choice of the density $p_\theta(x)$ that minimize the relevant entropy \textit{w.r.t.} the true distribution $p_{\theta_0}(x)$ by a divergence:
\begin{equation}\label{eq:div}
    D_X(p_{\theta_0},p_\theta) = \mathbb{E}_{p_{\theta_0}}  \log \frac{p_{\theta_0}(X)}{p_\theta(X)}  = 
 \int_x  p_{\theta_0}(x) \log \frac{p_{\theta_0}(x)}{p_{\theta}(x)} dx
\end{equation}
 
In the information theory, the divergence in form \eqref{eq:div} is positive, convex and become zero when $\theta = \theta_0$. {Suppose its secondary derivative exists. When the divergence reach its optimal, the first order derivative is zero and the secondary derivative at $\theta_0$ is defined as Fisher Information \citep{brunel1998mutual}:}
\begin{equation} \label{eq:fidef}
  \mathcal{I}_p(\theta) = \frac{\partial^2}{\partial \theta^2}D_x(p_{\theta_0},p_\theta) = -\mathbb{E}_{X}\left[\frac{\partial^2 \log p_{\theta}(x)}{\partial \theta^2}\right]
\end{equation}
{Fisher Information is thus not a function \textit{w.r.t.} the stochastic variable $X$, but a function \textit{w.r.t.} the probabilistic density $p_\theta$ and useful for parametric estimation of distributions.}

One important characteristic of Fisher Information is that larger Fisher Information implies better understanding of the parameter, which facilitates the parameter estimation: Considering the curve of the divergence $D_X(p_{\theta_0},p_\theta)$, which is convex, larger Fisher Information makes the curve more ``steep" (\textit{e.g.} when $\mathcal{I}_p(\theta) = +\infty$, it becomes a \textit{dirac} centered on $\theta_0$), and it becomes easier to reach the optimal $\theta_0$. In this way, it reflects the quality of parameters regarding the approximation between modeled distribution $p_{\theta}(X)$ and the true distribution $p_{\theta_0}(X)$.

\subsection{ELBO as Information Transmission}
{For the simplicity and clarity of the formulation, we make two following assumptions:
First, we suppose all the stochastic variables are continue and have a probabilistic density. Second, we suppose that the probabilistic density functions are sufficiently regular, \textit{i.e.} they are continuously derivable and tend to zero at infinity (also for their first order derivative)\footnote{These hypothesis can be cancelled out by mathematical techniques to meet the request of the real-world situation.}.}

Suppose that our data $\chi$ is a set of samples, $\chi = \left\{x| x \in \mathbb{R}^d \right\}$, where $d$ is the dimension of one data sample. The measurable space of latent variable $Z$ is of dimension $s$, \textit{i.e.} $z \in \mathbb{R}^s$. According to the objective of \textbf{ELBO}, mentioned in Eq. \eqref{eq:elbo}, the VAE models and represents the data samples with the following process: 

$$
  \forall x \in \chi,\quad x \to z \to \widehat{x}
$$
where  $\widehat{x}$ is the sample reconstructed from the latent code $z$. {This process is implemented as an autoencoder in \cite{journals/corr/KingmaW13} but ignore the modeling of hidden layers.}

To be more detailed, the impacts of intermediate layers of encoder and decoder is naturally introduced. Therefore, by noting the output of the $l^{th}$ hidden layer as $h_l$ ($0<l<L$, L is the depth of network), the encoding (\emph{resp. decoding}) process can be illustrated as: 
$$ \text{encoding: } x \to h^{(en)}_1 \to h^{(en)}_2 \to \cdots \to z $$ 
$$ \text{decoding: } z \to h^{(de)}_1 \to h^{(de)}_2 \to \cdots \to \widehat{x} $$ 

Since the neural networks possess their probabilistic interpretation \citep{bishop2006pattern} (for example, the output of a MLP with linear activation can be interpreted as the mean of a conditional Gaussian distribution with a fixed variance \citep{pascanu2013revisiting}), we model the output of one hidden layer (\textit{e.g.} the $l^{th}$ layer) by using a stochastic variable $H_l$: 
$$ H_l \sim p_{\theta}(h_l|x, h_{1:l-1})$$
note that since the the network is not always a MLP, nor with linear activation, the distribution is not necessarily of Gaussian form, while can be regarded as implicit distribution. 

Therefore, based on the detailed auto-encoding process and the probabilistic view of hidden layers, the variational distribution $q_{\phi}(z|x)$ (\emph{resp.} generative distribution $p_{\theta}(\widehat{x}|z)$) can be reformulated as:
\begin{equation} \label{eq:detail_autoencoding}
\begin{split}
\text{encoding: } q_{\phi}(z|x,h^{(en)}_{1},h^{(en)}_2,\dots,h^{(en)}_{L-1})\\
\text{decoding: } p_{\theta}(\widehat{x}|z,h^{(de)}_1,h^{(de)}_2,\dots,h^{(de)}_{L-1})
\end{split}
\end{equation}

From the from \eqref{eq:detail_autoencoding}, we can learn that the information transmission is not only dependent on data $x$ and latent code $z$, but also the intermediate layers $h_{1:L}$. In previous work, such as \cite{zhao2017infovae}, Mutual Information is measured to reinforce the connection between $x$ and $z$. However, the relation between $x$ and $z$ can only partially reflects how information evolves through the hidden layers $h_{1:L}$ in the information transmission process. Plus, as the architecture becomes deeper, the measure between $x$ and $z$ is more complex to compute. These issues make the degeneration even harder to address. 

To address these concerns, we transform the non-parametric measure to parametric in order to investigate the layer-wise information. Thus, using Fisher Information, we can evaluate the information propagation quality through the encoder and decoder. Since Fisher Information is a parametric-wise measure, we can evaluate the quality of variational distribution and generative distribution \textit{w.r.t.} the network parameters $\phi$ and $\theta$. In the light of Fisher Information, we analyze the degeneration as a phenomenon of information loss, which will be discussed in the next part. 

\subsection{Degeneration Analysis with Fisher Information}
Fisher Information has been applied for efficient gradient backpropagation and the exact computation over layer-wise parameters can be achieved in neural networks \citep{Ollivier2015,Desjardins:2015:NNN:2969442.2969471}. In VAE models, encoding and decoding networks are applied to compute $q_{\phi}(z|x)$ and $p_{\theta}(x|z)$ \citep{journals/corr/KingmaW13}. We further generalize these distribution as $q_{\phi}(z|x,h^{(en)}_{1:L})$ and $p_{\theta}(x|z,h^{(de)}_{1:L})$, in consideration of the impacts of hidden layer output. To investigate the quality of parameters $\phi$ and $\theta$ in these distribution, Fisher Information is thus computed over parameters of the network and we discover the information loss, that causing the degeneration phenomena, as shown in the following proposition:
\begin{proposition}\label{prop:FI}
Suppose we apply a feed-forward network $F$ with $L$ layers to approximate a distribution $p(z|x)$, parameterized by $\Phi=\{\phi_1,\phi_2, \dots, \phi_L\}$:
$$Z \sim p_{\Phi}(z|x, h_{1:K}) =  F(x) = f_{\phi_L} \circ f_{\phi_{L-1}} \circ \cdots \circ f_{\phi_1}(x)$$
the network represents an approximated distribution $p_{\Phi}(x)$.
To evaluate the information evolution through the network, the compute Fisher Information in $l^{th}$ and $(l+1)^{th}$ layer can be computed and deduced as:
\begin{equation}\label{eq:Fisher}
\mathcal{I}_F(\phi_{l+1}) = \mathcal{I}_F(\phi_l) \left(\frac{\nabla_{\phi_l}F(X)}{\nabla_{\phi_{l+1}}F(X)}\right)^2
\end{equation}
where $\nabla_{\phi_l}F$ is the backpropagated gradient through the $i^{th}$ non-linearity. Note that the network $F$ can be either encoder or decoder. The corresponding input (\emph{resp.} parameter) can correspondingly be $x$ or latent code $z$ (\emph{resp.} $\Phi$ or $\Theta$). 
\end{proposition}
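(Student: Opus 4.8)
The plan is to avoid the second-derivative definition \eqref{eq:fidef} directly and instead work from the equivalent score form of the Fisher Information, then transport it across layers by the chain rule of backpropagation. First I would show that, under the regularity assumptions already imposed (the densities and their first derivatives are continuous and vanish at infinity), differentiating the normalization identity $\int p_\Phi(x)\,dx = 1$ twice and using those assumptions to interchange differentiation and integration converts \eqref{eq:fidef} into the standard identity
\begin{equation*}
\mathcal{I}_F(\phi) = -\mathbb{E}_X\!\left[\frac{\partial^2 \log p_\Phi(x)}{\partial \phi^2}\right] = \mathbb{E}_X\!\left[\left(\frac{\partial \log p_\Phi(x)}{\partial \phi}\right)^{\!2}\right].
\end{equation*}
This reduces the entire claim to tracking how the score $\partial \log p_\Phi/\partial\phi$ changes when $\phi$ ranges over the parameters of successive layers.

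Next I would factor the score through the network output. Because $p_\Phi$ depends on both $\phi_l$ and $\phi_{l+1}$ only through $F(x)$, the chain rule gives $\partial_{\phi_l}\log p_\Phi = (\partial_F \log p_\Phi)\,\nabla_{\phi_l}F$ and $\partial_{\phi_{l+1}}\log p_\Phi = (\partial_F \log p_\Phi)\,\nabla_{\phi_{l+1}}F$, where the two scores share the common upstream factor $\partial_F\log p_\Phi$ and differ only in the layer-specific backpropagated gradient (the product of the downstream non-linearity Jacobians). Substituting into the squared-score form and forming the ratio $\mathcal{I}_F(\phi_{l+1})/\mathcal{I}_F(\phi_l)$, the shared factor cancels and leaves exactly the squared ratio of the two backpropagated gradients, which is \eqref{eq:Fisher}. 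Conceptually this is the reparametrization-invariance of Fisher Information, $\mathcal{I}(\eta) = (\partial\theta/\partial\eta)^2\,\mathcal{I}(\theta)$, applied to the effective map that each layer induces on the output; I would lean on this identity because it already carries the squared-Jacobian factor that the proposition requires.

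The delicate step, and the one I expect to be the main obstacle, is the cancellation in the ratio. It is exact only when the backpropagated gradient can be pulled outside $\mathbb{E}_X$, whereas in general $\nabla_{\phi_l}F$ depends on $x$ and does not factor out of $\mathbb{E}_X[(\partial_F\log p_\Phi)^2(\nabla_{\phi_l}F)^2]$; moreover \eqref{eq:Fisher} is written with $\nabla_{\phi_l}F$ treated as a scalar, so the literal ratio is only well-defined for a one-dimensional output (or coordinatewise). I would therefore make the statement rigorous in the reparametrization reading --- a scalar output, or per-sample/per-coordinate, where the Jacobian is deterministic --- and note that the fully multivariate version must replace the scalar ratio by the matrix rule $\mathcal{I}(\eta) = J^{\top}\mathcal{I}(\theta)\,J$ with $J$ the layer-to-output Jacobian. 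Framed this way, the chain rule does the real work and the only thing to argue carefully is exactly where the clean scalar factorization is valid.
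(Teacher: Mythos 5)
Your proof reaches the same formula but by a genuinely different route from the paper's. The paper works directly with the second-derivative definition \eqref{eq:fidef}: it writes $\partial \log F/\partial\phi_{l+1} = (\partial\log F/\partial\phi_l)\cdot(\partial\phi_l/\partial\phi_{l+1})$, expands the second derivative by the product rule, kills the cross term using the zero-mean-score identity $\mathbb{E}[\partial\log F/\partial\phi_l]=0$, and then gives meaning to the otherwise undefined quantity $\partial\phi_l/\partial\phi_{l+1}$ by invoking the gradient-descent update $\phi_l^{(t+1)} = \phi_l^{(t)}+\lambda\,\nabla_{\phi_l}F$, so that $\partial\phi_l/\partial\phi_{l+1} = \nabla_{\phi_l}F/\nabla_{\phi_{l+1}}F$. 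You instead pass to the squared-score form, factor both scores through the common output $F(x)$ as $(\partial_F\log p_\Phi)\,\nabla_{\phi_\cdot}F$, and cancel the shared upstream factor in the ratio. Both arguments are at bottom the reparametrization rule $\mathcal{I}(\eta) = (\partial\theta/\partial\eta)^2\,\mathcal{I}(\theta)$; the difference is in how the cross-layer derivative is manufactured. Your version is the more transparent one: it makes explicit that the cancellation in \eqref{eq:Fisher} is exact only when the backpropagated gradients can be pulled outside $\mathbb{E}_X$ (i.e., treated as deterministic scalars), and that the genuinely multivariate case needs the matrix rule $J^{\top}\mathcal{I}\,J$ rather than a squared scalar ratio. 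The paper's proof rests on the same implicit assumptions --- the ratio $\nabla_{\phi_l}F/\nabla_{\phi_{l+1}}F$ is moved in and out of the expectation as an $x$-independent scalar --- but hides them inside the gradient-step substitution, which is itself a heuristic (the parameters of distinct layers are not literally functions of one another, and the learning rate $\lambda$ only disappears because it occurs in both numerator and denominator). What the paper's route buys is a derivation that stays entirely within the second-derivative definition and never needs the score-variance identity; what yours buys is an honest delineation of exactly where the scalar factorization is valid, which is the step most in need of justification in either version.
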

\begin{proof}
By definition in Eq. \eqref{eq:fidef}, {the Fisher Information passed through} the $l^{th}$ layer can be written as: 
$$\mathcal{I}_F(\phi_l) = -\mathbb{E}_{Z}\left[\frac{\partial^2 log F(x)}{\partial \phi_l^2}\right]$$
Similarly, for layer $l+1$, we compute the Fisher Information by definition and have:
\begin{align} 
 \mathcal{I}_F(\phi_{l+1}) 
&= -\mathbb{E}_{Z}\left[\frac{\partial^2 log F(x)}{\partial \phi_{l+1}^2}\right] \nonumber\\ 
&= -\mathbb{E}_{Z}\left[\frac{\partial}{\partial \phi_{l+1}}\left(\frac{\partial log F(x)}{\partial \phi_{l}} \cdot \frac{\partial \phi_l}{\partial \phi_{l+1}}\right)\right] \nonumber\\
&= -\mathbb{E}_{Z}\left[\frac{\partial^2 log F(x)}{\partial \phi_l \partial \phi_{l+1}}\cdot \frac{\partial \phi_l}{\partial \phi_{l+1}} + 
\frac{\partial^2 \phi_l}{\partial \phi_{l+1}^2} \cdot \frac{\partial log F(x)}{\partial \phi_l}\right] \nonumber\\
&= -\mathbb{E}_{Z}\left[\frac{\partial^2 log F(x)}{\partial \phi_l^2}\right] \cdot \left(\frac{\partial \phi_l}{\partial \phi_{l+1}}\right)^2 -\mathbb{E}_{Z}\left[\frac{\partial log F(x)}{\partial \phi_l}\right] \cdot
\frac{\partial^2 \phi_l}{\partial \phi_{l+1}^2} \nonumber \\ 
&= \mathcal{I}_F\left(\phi_l\right) \cdot \left(\frac{\partial \phi_l}{\partial \phi_{l+1}}\right)^2 -\mathbb{E}_{Z}\left[\frac{\partial log F(x)}{\partial \phi_l}\right] \cdot
\frac{\partial^2 \phi_l}{\partial \phi_{l+1}^2} \label{eq:proof1}
\end{align}

In Eq. \eqref{eq:proof1}, the term $\mathbb{E}_{Z}\left[\frac{\partial log F(x)}{\partial \phi_l}\right]$ is proved zero in many works of information theory \citep{brunel1998mutual,LY201740}. Thus we can only consider the first term. Additionally, between epoch $\{t,t+1\}$ in gradient descend optimization, $\phi_l^{(t+1)} = \phi_l^{(t)} + \lambda \cdot \nabla_{\phi_l}F$, where $\lambda$ is the learning rate.
Then we have $\partial \phi_l=\lambda \cdot \nabla_{\phi_l}F$.
Finally we have: 
$$\mathcal{I}_F(\phi_{l+1}) = \mathcal{I}_F\left(\phi_l\right) \left(\frac{\nabla_{\phi_l}F(X)}{\nabla_{\phi_{l+1}}F(X)}\right)^2.$$
\end{proof}
Using proposition \ref{prop:FI}, we can interpret the information transmission through the network by evaluating the gradient propagated through the network as a remark: 

\begin{remark}
Many works such as \cite{saxe2013exact} have reported that the gradient tends to get smaller as we move backward through the hidden layer. 

Using Eq. \eqref{eq:Fisher}m we have thus:
	$$ \mathcal{I}_F(\phi_{l+1}) \leq \mathcal{I}_F\left(\phi_l\right) $$
   which indicates that deeper layers tend to obtain less information layer by layer.
\end{remark}

It is interesting to notice the difference between a typical feed-forward neural network and VAE in perspective of information loss. Actually, information loss widely exists in deep neural networks, but it is often ignored. In fact, deep networks is powerful in learning hierarchical features\citep{zeiler2014visualizing}, with the learning process that tends to make features compact and discard superfluous information. This process is widely tolerated in networks' tasks though risky in degeneration in some cases.

{However, VAE cannot \textbf{simply} go deeper as networks. The loss of information makes VAE difficult to reach the true parameter $\phi_0$ and $\theta_0$. Recall the \textbf{ELBO} in Eq. \eqref{eq:elbo}, both two terms show the dependency between $q_\phi$ and $p_\theta$: the reconstruction request to compute the expectation of $\log p_\theta(x|z)$ \textit{w.r.t.} $q_\phi(z|x)$; meanwhile, the KL divergence connect $p_\theta$ and $q_\phi$. Therefore, either inaccurate parameter estimation of $q_\phi$ or $p_\theta$ will mislead the model's learning balance between $q_\phi$ and $p_\theta$. As a result, when facing the information loss, VAE needs to pick a choice between useful latent code and high-quality generation. }



\section{Fisher Information Preservation}\label{sec:info}
As discussed, the information loss is ineluctable in VAE and causes degeneration in deep architecture. A natural solution is thus preserving information without changing the parameter structure. In this section, we propose a simple but effective way for information preservation in VAE. 

{The skip connections \citep{he2016deep} can skip one or more layers of nonlinear mapping without changing the parameter dimension. Moreover, we demonstrate them as complementary information flows in this section. Thus, we propose a class of VAE equipped with skip connections, named SCVAE, to preserve the information in this way.}

As discussed, the output of a hidden layer in the neural network can be regarded as a stochastic variable and has a probabilistic distribution. Formally, we pose the stochastic variable $H_l$ to model the output of $l^{th}$ hidden layer, whose probabilistic density function is an implicit density function represented by the network:
\begin{equation}\label{eq:subnet}
    H_{l} \sim p_{\Phi}(h_{l}|x,h_{1},h_2,\dots,h_{l-1}) 
\end{equation}

When equipped with a skip connection (we use the stochastic variable $C$ to present) which skips $k$ layers, the output contains information from both former layer and the skip connection, the output of this layer thus is presented by a set of jointly distributed random variables $(H_l,\mathnormal{C})$ ($1<l\leq L$, $1 \leq k \leq l-1$):
$$(H_l,\mathnormal{C}) \sim p_{\Phi}((h_l,\mathnormal{c}(h_{l-k}))|x,h_{1},\dots,h_{l-k},\dots,h_{l-1}) $$
We analyze the Fisher Information of output in order to find out if skip connections contribute to information preservation. The evaluation of Fisher Information thus becomes $\mathcal{I}_{(h_l,\mathnormal{c}(h_{l-k}))}\left(\phi_l\right)$. Since Fisher Information is always greater than or equal to zero, we can expand as follow by its chain rule \citep{Zegers2015} and deduce Proposition \ref{rmk:sc}:
\begin{align}\label{eq:chain}
\begin{split}
    \mathcal{I}_{(h_l,\mathnormal{c}(h_{l-k}))}\left(\phi_l\right) &= \mathcal{I}_{h_l}\left(\phi_l\right) + \mathcal{I}_{\mathnormal{c}(h_{l-k})|h_l}\left(\phi_l\right) \\
                                                &\geq \mathcal{I}_{h_l}\left(\phi_l\right) 
\end{split}
\end{align}

\begin{proposition}\label{rmk:sc}
Suppose the output of the $l^{th}~(l>1)$ hidden layer parameterized by $\phi_l$ receives information $f(h_{l-1})$ from the former layer $h_{l-1}$ and outputs the distribution: 
$$p_{\phi_l}(h_l|f(h_{l-1})) = p_{\phi_l}(h_l|x, h_{1:l-1}) $$. 

Modeling with Fisher Information, when connected with skip connections, this layer shall receive more information compared with non-skip architecture:
\begin{align}\label{eq:info}
\begin{split}
    \mathcal{I}_{(h_l,\mathnormal{c}(h_{l-k}))}\left(\phi_l\right) &= \mathcal{I}_{h_l}\left(\phi_l\right) + \mathcal{I}_{\mathnormal{c}(h_{l-k})|h_l}\left(\phi_l\right) \\
                                                &> \mathcal{I}_{h_l}\left(\phi_l\right) 
\end{split}
\end{align}
where the skip connection $c$ passes information $\mathnormal{c}(h_{l-k})$ from layer $h_{l-k}$ by skipping $k$ layers ($1 \leq k \leq l-1$). 
\end{proposition}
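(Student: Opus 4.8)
The plan is to derive the proposition entirely from the chain rule for Fisher Information \citep{Zegers2015} already invoked in Eq.~\eqref{eq:chain}; the only genuinely new content beyond that equation is promoting the bound from ``$\geq$'' to the strict ``$>$'' asserted in Eq.~\eqref{eq:info}. First I would treat the output of layer $l$ equipped with the skip connection as the jointly distributed pair $(H_l,\mathnormal{C})$ with $\mathnormal{C}=\mathnormal{c}(h_{l-k})$, and apply the chain rule to obtain the exact additive decomposition
\[
\mathcal{I}_{(h_l,\mathnormal{c}(h_{l-k}))}\left(\phi_l\right) = \mathcal{I}_{h_l}\left(\phi_l\right) + \mathcal{I}_{\mathnormal{c}(h_{l-k})|h_l}\left(\phi_l\right),
\]
where, unwinding the definition in Eq.~\eqref{eq:fidef}, the residual is the conditional Fisher Information $\mathcal{I}_{\mathnormal{c}(h_{l-k})|h_l}(\phi_l) = -\mathbb{E}\left[\partial^2_{\phi_l}\log p_{\phi_l}(\mathnormal{c}(h_{l-k})|h_l)\right]$, the expectation being over the joint law of $(H_l,\mathnormal{C})$.

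Second, I would settle the non-strict bound. For each fixed value of $h_l$ the conditional density $p_{\phi_l}(\,\cdot\,|h_l)$ is a legitimate probability density in the parameter $\phi_l$, so its Fisher Information is non-negative; averaging over $h_l$ preserves this, giving $\mathcal{I}_{\mathnormal{c}(h_{l-k})|h_l}(\phi_l)\geq 0$. Substituting into the decomposition reproduces exactly the bound $\mathcal{I}_{(h_l,\mathnormal{c}(h_{l-k}))}(\phi_l)\geq \mathcal{I}_{h_l}(\phi_l)$ recorded in Eq.~\eqref{eq:chain}, so these two routine steps already recover everything except strictness.

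The hard part will be the strict inequality, equivalent to showing $\mathcal{I}_{\mathnormal{c}(h_{l-k})|h_l}(\phi_l)>0$. This fails exactly when, conditioned on $H_l$, the skip-routed signal $\mathnormal{c}(h_{l-k})$ is independent of $\phi_l$, i.e. when the conditional score $\partial_{\phi_l}\log p_{\phi_l}(\mathnormal{c}(h_{l-k})|h_l)$ vanishes almost everywhere. To exclude this I would argue that $H_l$ and $\mathnormal{C}$ share the common upstream output $h_{l-k}$ --- the former reaching it through the $\phi_l$-parameterized main path $h_l=f_{\phi_l}(h_{l-1})$, the latter through the skip map $\mathnormal{c}$ --- so that $(H_l,\mathnormal{C})$ are statistically dependent and the conditional law of $\mathnormal{C}$ given $H_l$ inherits a non-trivial dependence on $\phi_l$ under the regularity assumptions of Section~\ref{sec:forget}. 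Pinning down precise conditions on the architecture and on the map $\mathnormal{c}$ that guarantee this non-degeneracy is the substantive obstacle; granting it, strict positivity of the residual term --- and hence Eq.~\eqref{eq:info} --- is immediate from the decomposition.
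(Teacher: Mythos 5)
Your proposal follows essentially the same route as the paper's own proof: apply the Fisher Information chain rule to get the additive decomposition, observe the conditional term is non-negative, and then argue strictness by noting that $h_l$ and $\mathnormal{c}(h_{l-k})$ cannot be independent because both depend on the shared upstream output $h_{l-k}$. If anything, you are more candid than the paper in flagging that the strict positivity of $\mathcal{I}_{\mathnormal{c}(h_{l-k})|h_l}(\phi_l)$ rests on a non-degeneracy condition that is asserted rather than fully established; the paper simply invokes the ``equality iff independence'' criterion from the chain-rule theorem and declares the variables dependent.
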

\begin{proof}
We only need to prove that $\mathcal{I}_{\mathnormal{c}(h_{l-k})|h_l}\left(\phi_l\right)$ is not zero in Eq. \eqref{eq:chain}. According to the theorem of chain rule \citep{Zegers2015}, for the inequality in Eq. \eqref{eq:chain}, the equality sign holds if and only if $h_l$ and $c(h_{l-k})$ are independent: $$\mathcal{I}_{\mathnormal{c}(h_{l-k})|h_l}\left(\phi_l\right) = 0 \Longleftrightarrow h_l \independent c(h_{l-k})$$ 
We have $h_{l} \sim p_{\theta}(h_{l}|x,h_{l},h_2,\dots,h_{l-k},\dots,h_{l-1})$ in Eq. \eqref{eq:subnet}, which indicates that $h_l$ and $c(h_{l-k})$ are not independent because they are both dependent to  $h_{l-k}$ and we have:
$$\mathcal{I}_{\mathnormal{c}(h_{l-k})|h_l}\left(\phi_l\right) > 0 \Longleftrightarrow  \mathcal{I}_{(h_l,\mathnormal{c}(h_{l-k}))}\left(\phi_l\right) > \mathcal{I}_{h_l}\left(\phi_l\right)$$
\end{proof}
Thanks to Proposition \ref{rmk:sc}, the skip connection can be regarded as a complementary information flow between layers. 
Following this idea, we propose a VAE model equipped with skip connections, named SCVAE. We make SCVAE skip one or more layers to keep information amount as rich as possible. Our model with one-layer skipping connections can be described by the following equations:
\begin{align}
    \begin{split}
    &h_l^{(en)} = f_l^{(en)}\left(h_{l-1}^{(en)}\right) + g\left(h_{l-1}^{(en)}\right) \\
    &z \sim \mathcal{N}(\mu(h_L),\sigma(h_L)) \\
    &h_l^{(de)} = f_l^{(de)}\left(h_{l-1}^{(de)}\right) + g\left(h_{l-1}^{(de)}\right)
    \end{split}
\end{align}
where $f_l$ indicates the $l^{th}$ layer's mapping in the neural network, $g$ is a is a down-sampling or up-sampling function, $h_0^{(en)} = x_{data}$, $h_0^{(de)} = z$, and $L$ is the depth of inference network. The model could also include long-skipping-distance connections which skip multiple layers to strengthen the sharing between low-level and high-level features, described as:
\begin{equation}
    h_l^{(nn)} = f_l^{(nn)}\left(h_{l-1}^{(nn)}\right) + g\left(h_{l-k}^{(nn)}\right)
\end{equation}
where $l-1 > l-k > 0$ and $(nn)$ refers to encoder/decoder.

In this way, SCVAE is essentially designed for information preservation. Skip connections as a simple method to preserve information flow, do not increase computation complexity and is compatible with many models as shown in experiments.


\section{Experimental Results}
In the following, we implement the experiments to verifying below three questions:
\begin{itemize}
    \item Whether the skip connections contribute to the information preservation in VAE models, especially in deep VAE architecture.
    \item Whether the observed degeneration gets mitigated as information is preserved, thus improves VAE model's performance when going deeper.
    \item Whether SCVAE is compatible with other method to reach a further amelioration. 
\end{itemize}

The experiments are conducted on the MNIST dataset that consists of ten categories of 28$\times$28 hand-written digits. We follow the standard split 50,000/10,000/10,000 to partition the dataset as the training, validation and test parts. 

Plain VAE and SCVAE are implemented using MLPs with layers of 500 parameters. The shallow VAE model is of depth 1 hidden layer. When we make the model's encoder (resp. decoder) deeper, we note as q++ (resp. p++). Otherwise encoder has the same depth with decoder. We also use a model SCVAE-L which only modifies SCVAE to contain only one long-skipping-distance connection in encoder to demonstrate the effect of long-skipping-distance connection. For all experiments, the dimension of latent space is set to 50. Quantitative results are presented with averaged values and Fisher Information is computed as noted in \cite{Desjardins:2015:NNN:2969442.2969471}.

\subsection{Fisher Information Preservation}
As discussed in Section \ref{sec:forget}, the information loss is one principle factor that obstructs VAE from going deeper. In this part, we evaluate how information amount decays as VAE goes deeper and their corresponding changes in encoder and decoder. From this perspective, we present the problems that deep VAE faces and how SCVAE overcomes these limitations to make VAE models deeper.

In the first experiment, we investigate the impact of depth on information amount in VAE. We respectively make their encoder and decoder deeper and compute their mean Fisher Information among layers. Figure \ref{fig:fishier depth} presents how information amount varies w.r.t. VAE depth. When VAE extends either encoder or decoder to go deeper, the average information amount keeps a decreasing tendency. Different from plain VAE, SCVAE could generally maintain information amount close to the same level. Although these two models have similar amount of information in shallow architecture, as the model goes deeper, SCVAE remains much richer information amount than plain VAE.  
\begin{figure}[t]
 \centering
\begin{minipage}{.4\textwidth}
         \includegraphics[width=\textwidth]{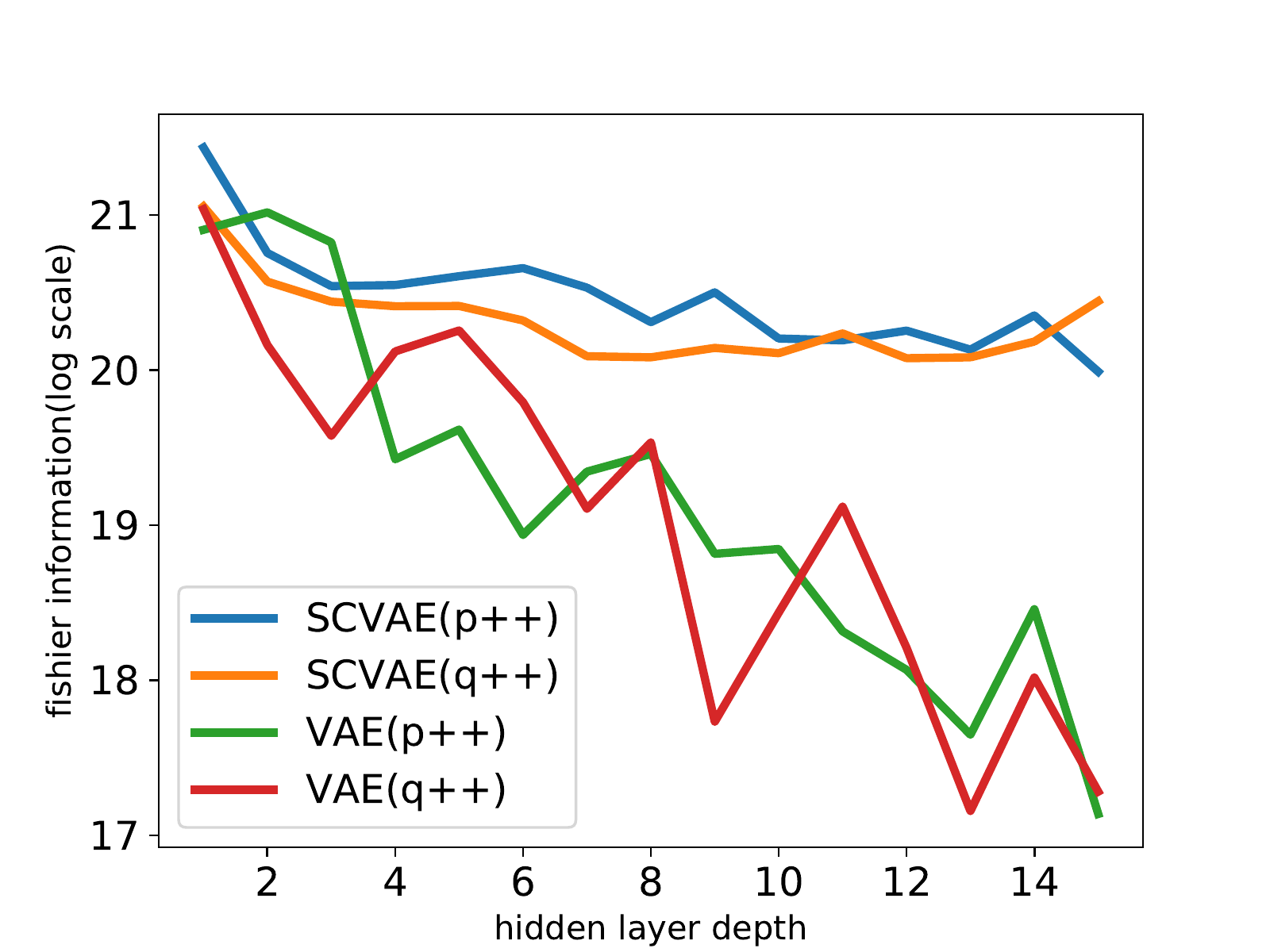}
    
     \caption{Mean Fisher Information w.r.t depth in VAE.}
     \label{fig:fishier depth}
\end{minipage}
\hfill
 \begin{minipage}{.4\textwidth}
         \includegraphics[width=\textwidth]{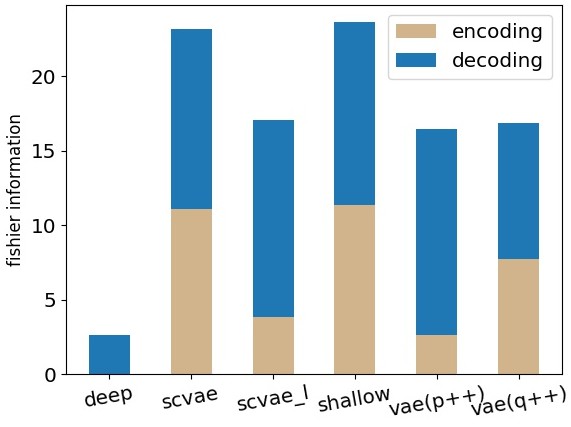}
     \caption{Mean Fisher Information in different VAE models.}
     \label{fig:fishier info}
\end{minipage}
     
\end{figure}

In the next experiment, we fix the depth for deepened part to 11-hidden-layer depth. For SCVAE-L, it keeps the same architecture as SCVAE, but only one long-skipping-distance connection exists in its encoder to connect the first and last hidden layer. We respectively compute the average Fisher Information in encoder and decoder, as shown in Figure \ref{fig:fishier info}. Deep VAE remains little information amount in the model, which corresponds to third type of degeneration mentioned in Section \ref{sec:forget}. When encoder goes deeper, information amount mainly decays in decoder; in reverse, information amount decays more in encoder when decoder goes deeper. This refers to the other two types of degeneration problems, indicating that blurry samples are caused by the lack of information in decoder, while abstruse latent representations are caused by the lack of information in encoder. In SCVAE-L, it is interesting that information amount in encoder is less than in SCVAE but richer than in VAE(p++), which implies that the model leverages the long-skipping-distance connection and augment the information amount in encoder but the capacity of preservation is finite. The advantages of skip connection in information preservation is shown in SCVAE, where we observe that SCVAE maintains the closest mean information amount to the shallow model, as well as the ratio between information amount in encoder and in decoder.

In these two experiments, we verify our claims in Section \ref{sec:forget} and \ref{sec:info}. When going deeper, VAE models tend to lose more information. We could thus associate information amount to phenomena in Figure \ref{motiv}. Information loss in decoder leads to blurry reconstruction samples, while loss in encoder leads to abstruse latent presentation. 

\subsection{Degeneration mitigation}
Previous experiments demonstrate VAE should carefully go deeper in case of information loss. In this part, we return to VAE tasks, \emph{i.e.}, representation learning and generation, in order to verify whether SCVAE mitigates the degeneration. We evaluate the representation learning with classification accuracy. A simple SVM (Support Vector Machine) is trained and test with the learned latent representation. As for generation, we evaluate with negative log-likelihood (NLL). All models keep the same architecture as in previous experiment.

Table \ref{tab:nll/clf} suggests going deeper results in an improvement in a specific task, though the other task performance suffers risk in degeneration: VAE(q++) achieves a better classification result, but sacrifices the NLL performance; VAE(p++) outperforms in NLL but under-performs in classification. Comparing to plain VAE, models with skip connections (SCVAE, SCVAE-L) achieve well performance in both tasks. Especially, SCVAE achieves the best result in both two tasks.

\begin{table}[tbp]
\centering
\caption{Test negative log-likelihood (NLL) and classification accuracy on MNIST}
\label{tab:nll/clf}
\begin{tabular}{lll}
Model & NLL & Acc \\ \hline
VAE(1L)\citep{journals/corr/KingmaW13}         & 87.89                 & 0.8421 \\
VAE(11L)        & 206.09                & 0.1135    \\
VAE(q++)  & 91.13                 &0.9352    \\
VAE(p++)  & 81.59                 & 0.7120     \\ \hline
SCVAE           & \textbf{80.19}                 & \textbf{0.9588}    \\
SCVAE-L     & 84.02                 & 0.9216    
\end{tabular}
\end{table}
\begin{figure}[htb]
\centering
        
        \includegraphics[width=.95\textwidth]{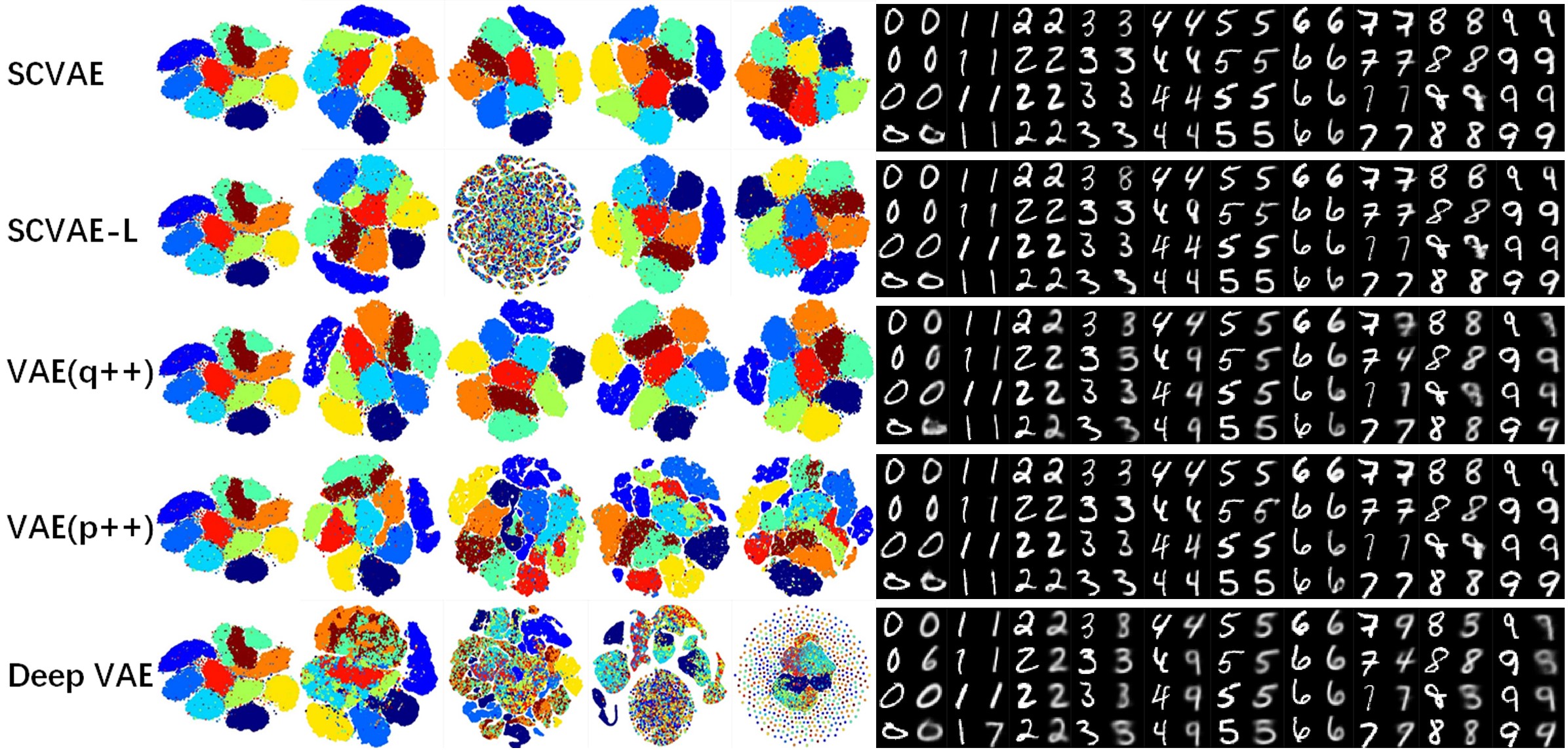}
        \label{fig:reconstruction}
    \caption{\textbf{Left}: representation visualization of raw data, first layer output, intermediate layer output, last layer output of encoder, and latent space (from left to right). \textbf{Right}: ground truth (odd columns) and reconstruction (even columns).}
    \label{fig:info evolution}
\end{figure}

In Figure \ref{fig:info evolution}, we present qualitative results of five deep models to have an intuitive understanding of the corresponding performance. As we analyzed in previous part, models suffer from degeneration due to the lack of information in encoder or decoder. When degenerated in encoder (VAE, VAE(p++)), the latent representation degenerates layer by layer in encoder; when degenerated in decoder (VAE, VAE(q++)), reconstructions are not only blurry but also contain incorrect digits. These phenomena explicates that the degenerated VAE does not connect the global and detailed information. When free from degeneration, SCVAE benefits from deep architecture to produce more clear reconstructions and to learn a compact representation. Recall that SCVAE-L contains less information in encoder than SCVAE and VAE(q++) (Figure \ref{fig:fishier info}), we notice that the intermediate layers show abstruse presentation, implying the information decays in these layers. 

In this experiment, we demonstrate that going deeper could benefit VAE models in specific task without information loss. Specifically, SCVAE is free from any type of degeneration, thus achieves the best performance among the previous models in both representation learning and generation.

\subsection{Combination with state-of-the-art}
To test the compatibility of deep SCVAE and other advancements in VAE, we respectively combine SCVAE with PixelVAE \citep{DBLP:journals/corr/GulrajaniKATVVC16} (with 8 pixel layers here) by concatenation, and with VampPrior \citep{tomczak2017vae} by substitution of Gaussian prior with VampPrior. SCVAE remains the same architecture as previous part.

In Figure \ref{fig:info pi}, we notice that our Fisher Information measure could also reflect the characteristics of state-of-the-art: we observe information in encoder is negligible comparing with decoder, which refers to the latent code ignorance problem in PixelVAE \citep{chen2016variational}. VampVAE has a more expressive posterior, thus performs better in latent coding and maintains more information in encoder. When combining with these methods, SCVAE improves their strength since SCVAE could be regarded as a powerful approximator for posterior and likelihood; SCVAE also remedies their shortcoming to a certain extent, by providing with more information amount. 

Table \ref{tab:combine} suggests that SCVAE has a comparable performance with the state-of-the-art. When combining with VampPrior and PixelVAE, SCVAE reinforces the power of these models, achieving a better performance than before. Notably, when combine all these methods, the performance of the final model becomes promising and competitive in both representation learning and generation. 

\begin{figure}[t]
\centering
    \begin{minipage}[t]{.4\columnwidth}
\captionof{figure}{Fisher Information measure in advanced models.}\label{fig:info pi}
        \includegraphics[width=\textwidth]{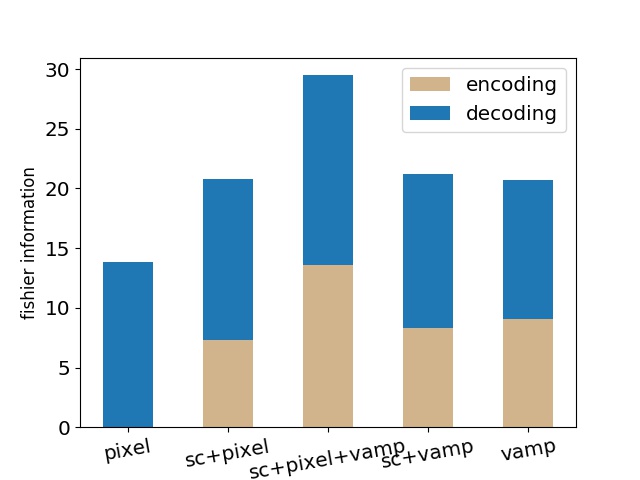}     
    \end{minipage}
    \hfill
    \begin{minipage}[t]{.58\columnwidth}
        \captionof{table}{Combination of SCVAE with state-of-the-art: Negative Log-Likelihood and classification accuracy on MNIST}
        \label{tab:combine}
        \begin{tabular}{lll}
        Model & NLL & Acc\\ \hline
        SCVAE                           &80.19   &0.9588 \\
        PixelVAE                        &79.48   &0.5148 \\
        VAE(1L) + VampPrior  &82.32 &0.9628 \\\hline
        SCVAE+VampPrior                 &81.63   &\textbf{0.9839}\\
        SCVAE + PixelVAE                &79.35     &0.7776 \\
        SCVAE + PixelVAE + VampPrior    &\textbf{79.26}      &0.9784
        \end{tabular}

    \end{minipage}

\end{figure}

\section{Conclusions}
In this paper, we investigate how deep architecture affects VAE models. Our observation shows that deeper architecture does not always benefit VAE performance due to three types of degeneration. In further analysis with our Fisher Information measure, we discover that the information loss is ineluctable for feed-forward networks and harms deep VAE with degeneration problems. Moreover, skip connections are proved to contribute in the information preservation without changing parameter structure. We thus propose a class of VAEs enhanced by skip connection, named SCVAE for information preservation and degeneration mitigation.

The experiments demonstrate the following advantages of SCVAE: 1) SCVAE maintains richer information to avoid the degeneration when going deeper; 2) SCVAE takes advantage of going deeper and achieve better performance in VAE's tasks, such as representation learning, generation, etc.; 3) SCVAE is compatible with other advanced VAE models for further improvement. Hence, SCVAE is promising in deep VAE and could be regarded as an appropriate design of the model. 

\bibliography{acml18}






\end{document}